\newtheorem{theorem}{Theorem}
\newtheorem{corollary}{Corollary}[theorem]
\theoremstyle{plain}
\title{Intersectional Fairness: A Fractal Approach
}
\author{
  Giulio Filippi$^*$, Sara Zannone$^*$, Adriano Koshiyama \\
  Holistic AI \\
  London \\
  UK\\
}
\begin{document}

\maketitle
\def\thefootnote{*}\footnotetext{These authors contributed equally to this work. contact: sara.zannone@holisticai.com.}\def\thefootnote{\arabic{footnote}}

\begin{abstract}
The issue of fairness in AI has received an increasing amount of attention in recent years. The problem can be approached by looking at different protected attributes (e.g., ethnicity, gender, etc) independently, but fairness for individual protected attributes does not imply intersectional fairness. In this work, we frame the problem of intersectional fairness within a geometrical setting. We project our data onto a hypercube, and split the analysis of fairness by levels, where each level encodes the number of protected attributes we are intersecting over. We prove mathematically that, while fairness does not propagate "down" the levels, it does propagate "up" the levels. This means that ensuring fairness for all subgroups at the lowest intersectional level (e.g., black women, white women, black men and white men), will necessarily result in fairness for all the above levels, including each of the protected attributes (e.g., ethnicity and gender) taken independently. We also derive a formula describing the variance of the set of estimated success rates on each level, under the assumption of perfect fairness. Using this theoretical finding as a benchmark, we define a family of metrics which capture overall intersectional bias. Finally, we propose that fairness can be metaphorically thought of as a “fractal” problem. In fractals, patterns at the smallest scale repeat at a larger scale. We see from this example that tackling the problem at the lowest possible level, in a bottom-up manner, leads to the natural emergence of fair AI. We suggest that trustworthiness is necessarily an emergent, fractal and relational property of the AI system. 
\end{abstract}

\keywords{Intersectional Fairness \and Dynamic Programming  \and Statistical Parity \and Geometry}

\section{Introduction}


The issue of fairness in AI has received an increasing amount of attention in recent years. A number of AI systems involved in sensitive applications, like recruitment or credit scoring,  were found to be biased against minority groups \cite{amazon, apple}. For these reasons, the machine learning research community has focused on finding solutions to reduce the bias found in these models \cite{agarwal2018reductions, Agarwal2019, li2019repair, mehrabi2021survey}. Many of these solutions focused on comparing either the outputs of the model (Equality of Outcome \cite{feldman2015certifying}) or the error made by the algorithm (Equality of Opportunity \cite{hardt2016equality}) for different groups. These groups are classically identified by splitting the population using individual protected attributes, for instance according to gender, ethnicity or age.  \\ 

However, what happens when these attributes intersect? The concept of intersectional fairness, initially introduced by Black feminist scholars \cite{crenshaw2013demarginalizing, crenshaw2013mapping}, has only recently been introduced in the context of AI. In their seminal work, Buolamwini and Gebru \cite{buolamwini2018gender} performed a thorough analysis of three of the most popular facial recognition algorithms on the market (Microsoft, IBM, and Face++). They found that all algorithms were better at recognizing men and people with lighter skin, but the performance drastically decreased for darker-skinned females. The fact that the discrimination faced by Black women was “greater than the sum” of the discrimination experienced by Black men and white women is a well-known concept in the feminist literature \cite{crenshaw2013demarginalizing}, also known as fairness gerrymandering in the AI literature \cite{Kearns2018}. \\ 

These results have sparked an interest and motivated the need for studying the intersectional fairness of AI algorithms. Practically, this has meant extending the study of bias from individual protected attributes to their intersectional groups. Most of the works on intersectional fairness thus far has been concerned with measuring bias when intersecting all the protected attributes \cite{Foulds2020, foulds2020bayesian, jin2020mithracoverage, Morina2019}. Since every dataset may include an arbitrary number of protected attributes, which can be combined in any number of ways, it is not clear how one should pick the right level of granularity for the analysis \cite{Kong2022}.  \\  

In our paper, we tackle the problem from a slightly different perspective. Instead of fixing the level of analysis a priori, we develop a framework that allows us to examine all possible intersectional groups, at all levels of granularity, and their relationship to each other. We frame the problem of intersectional fairness using a geometrical structure, specifically a hypercube, where each protected attribute can be seen as a different dimension of the hypercube. We then project our data on the hypercube and use it to analyse how bias-related metrics like success rate or accuracy vary for different subgroups. The geometrical and mechanistic nature of our framework also allows us to compute the fairness of all possible subgroups in a computationally efficient way.     \\

This allows us to see how the fairness properties vary for different levels. From our framework it is easy to see that fairness indeed does not propagate downwards (i.e., gerrymandering). However, we prove that it does propagate upwards. This means that ensuring fairness at the most granular level, the intersection of all protected attributes in the dataset, implies fairness at any possible intersection. We will show that these results hold for both Equality of Outcome and Equality of Opportunity frameworks. This mirrors and extends the results of \cite{Foulds2020, Morina2019, Yang2020}. Finally, we find that, under perfect fairness, the variance of the set of estimated success rates on a given level decreases exponentially with increasing levels. We use this formula as a benchmark, and propose a family of metrics which capture overall intersectional bias.\\

We argue that the novelty of our work lies especially in the fact that our framework provides a tool to observe fairness for all the possible intersectional subgroups at once, and how their fairness properties interlink. 
Not only this could be great for visualisation, but it could be useful for modelling, possibly improving a mechanistic understanding of intersectional fairness. \\


\section{Problem Setting}
\begin{figure}
\begin{subfigure}{0.45\linewidth}
\centering
\includegraphics[width=4.5cm]{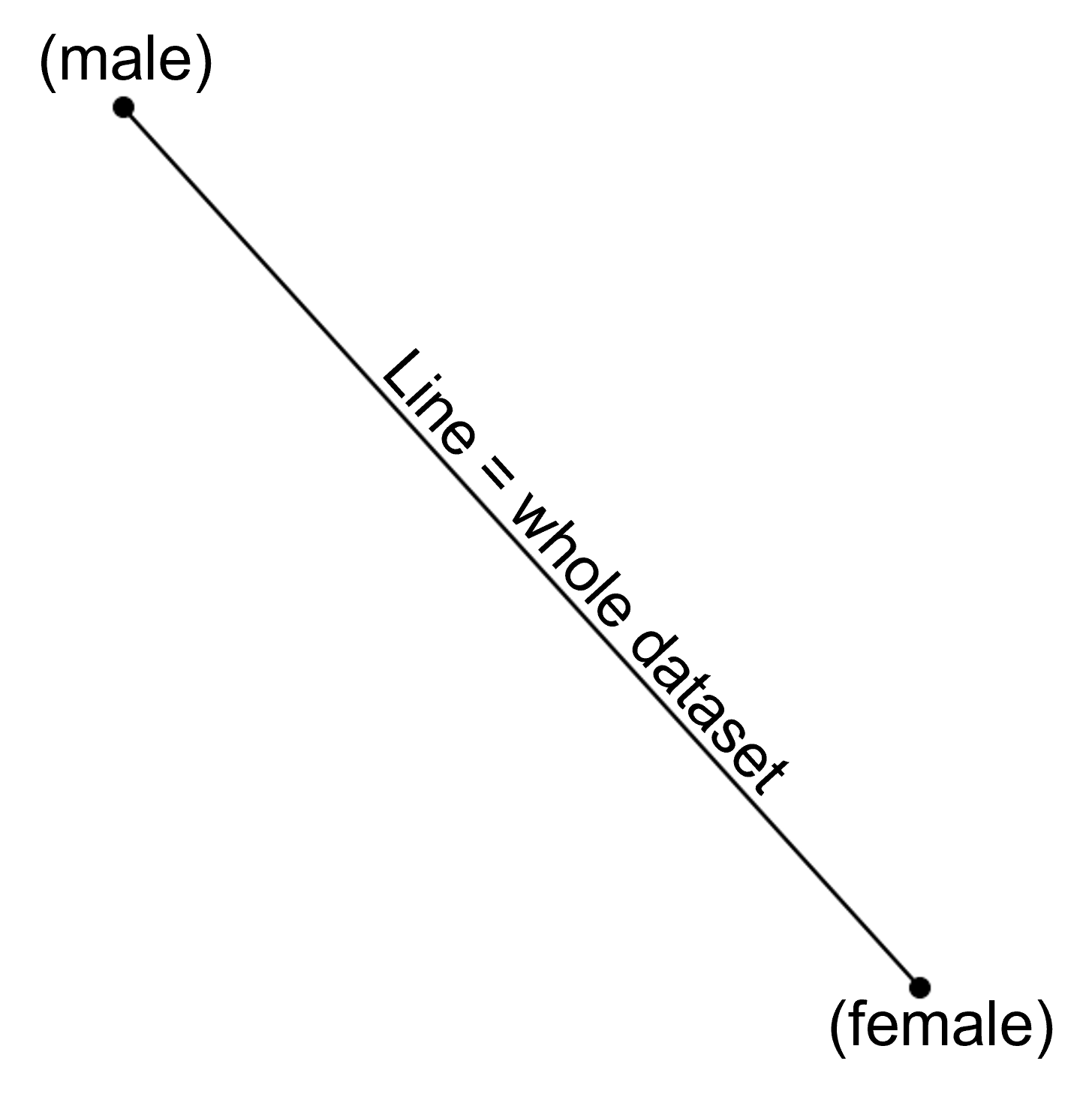}
\caption{}
\label{fig:4dcubea}
\end{subfigure}
\begin{subfigure}{0.45\linewidth}
\centering
\includegraphics[width=6cm]{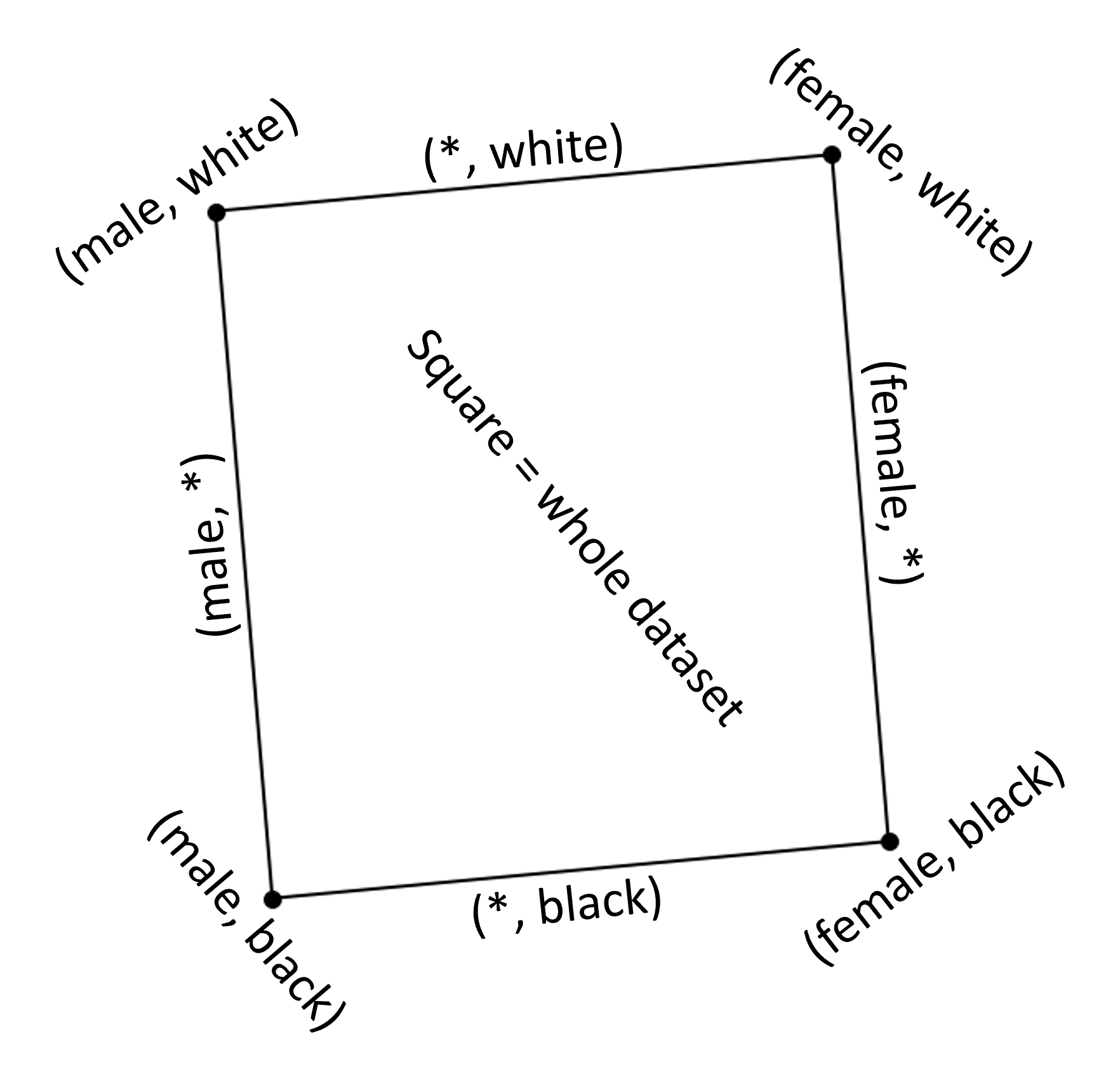}
\caption{}
\label{fig:4dcubeb}
\end{subfigure}\\
\begin{subfigure}{0.45\linewidth}
\centering
\includegraphics[width=6cm]{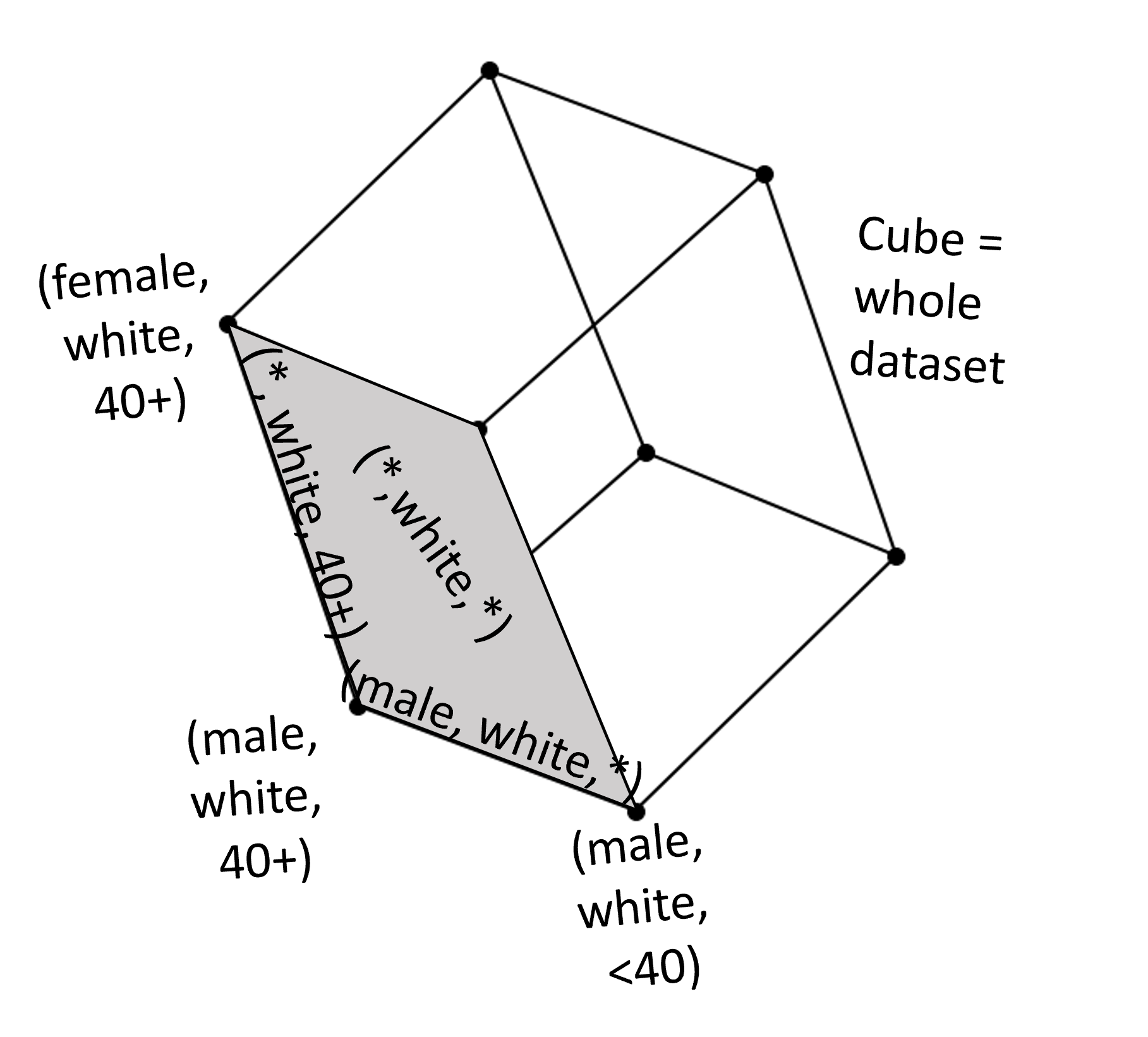}
\caption{}
\label{fig:4dcubec}
\end{subfigure}
\begin{subfigure}{0.45\linewidth}
\centering
\includegraphics[width=6.25cm]{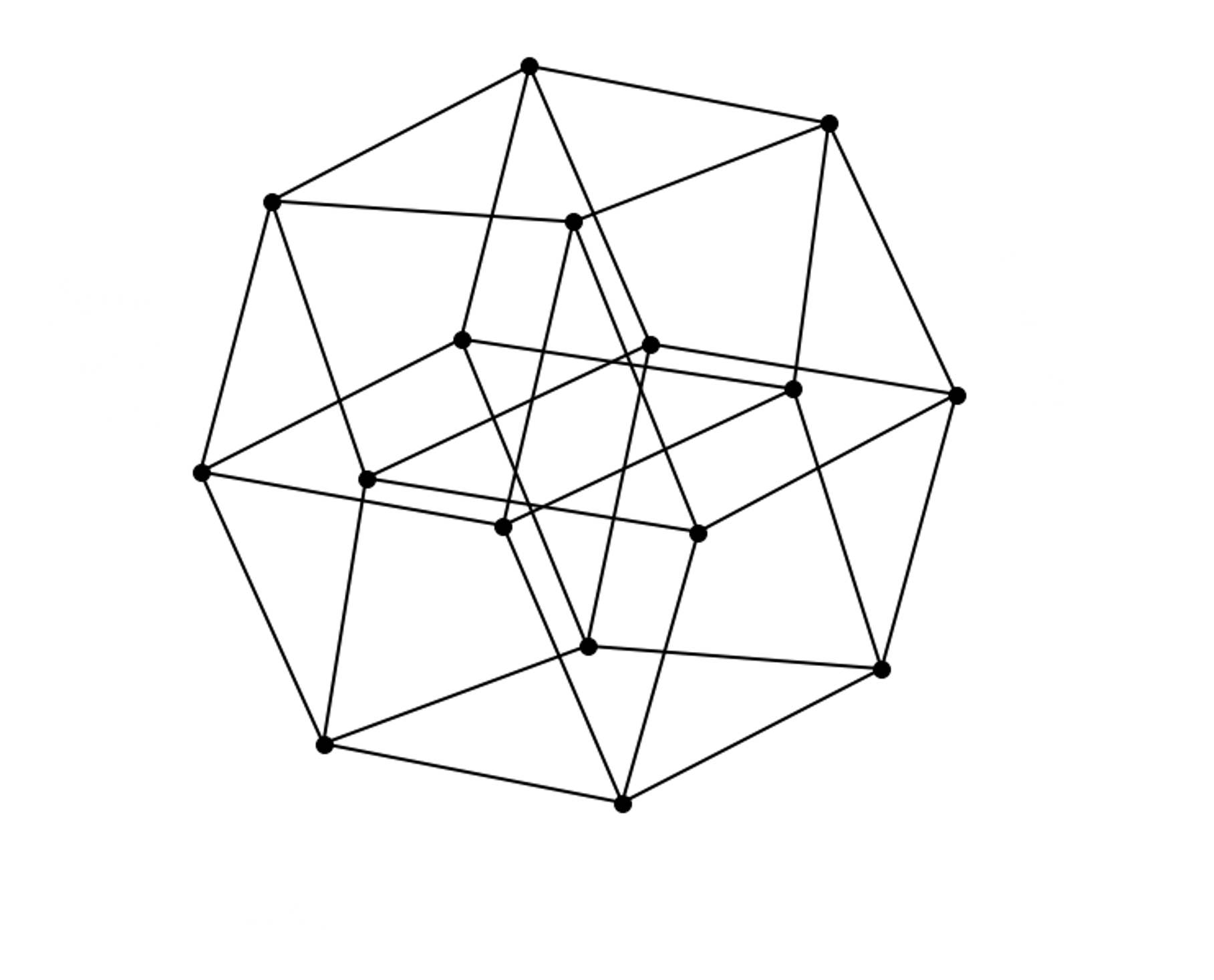}
\caption{}
\label{fig:hypercube4d}
\end{subfigure}
\caption{\textbf{Hypercubes}. \textbf{(a) 1D hypercube.} Assuming we have only one protected attribute (gender) with subgroups male/female. \textbf{(b) 2D hypercube.} Assuming there are two protected attributes (gender and ethnicity), with subgroups male/female and white/black respectively. \textbf{(c) 3D hypercube.} Assuming there are three protected attributes (gender, ethnicity and age), with subgroups male/female, white/black and <40/40+ respectively. \textbf{(d) 4D hypercube.} Projection of 4D hypercube.}
\label{fig:test1}
\end{figure}

We are given a binary classification dataset $\mathcal{D}$ with $N$ instances along with $M$ protected attributes (named $p_1, ..., p_M$). We assume all $M$ protected attributes are binary with groups labelled as 0 and 1. Please note that our analysis can be easily extended beyond the binary case, however, we choose to have binary protected attributes so that the results have a simple geometric interpretation. The first insight of our framework is that we can see our data as living on a $M$ dimensional hypercube. \\ 

A hypercube is a geometrical figure that can be extended to an arbitrary number of dimensions. A 1-dimensional hypercube is a line (Fig. \ref{fig:4dcubea}), a 2-dimensional hypercube is a square (Fig. \ref{fig:4dcubeb}), and a 3-dimensional hypercube is a cube (Fig. \ref{fig:4dcubec}). Each hypercube is made of two copies of the previous one linked together. Higher dimensional hypercubes are difficult to visualise, but share the same property (see Fig. \ref{fig:hypercube4d} for an example of a projection of a 4-dimensional hypercube). \\ 

First, we would like to explain the analogy between intersectional subgroups and hypercubes. If we consider a dataset with only one protected attribute, our dataset can be seen as belonging to a line, with each vertex being one of the subgroups (Fig. \ref{fig:4dcubea}). For example, if we consider gender as our protected attribute, then one vertex will be “male” and the other vertex will be “female”. If instead we consider two protected attributes, such as gender and ethnicity, our data will lie on a square (Fig. \ref{fig:4dcubeb}). In this case, each vertex will be the intersection of gender and ethnicity (e.g., white male, black male, white female and black female), and each line will be an individual attribute. Adding a third protected attribute, such as age, will increase the dimensionality again, and make our geometrical structure into a cube (Fig. \ref{fig:4dcubec}). In this case a single protected attribute will corresponds to the face of the cube, while the intersection of two attributes will correspond to an edge. The vertices of the cube will be the intersection of all three protected attributes. \\ 

The idea is that each protected attribute creates a division in the data, which can be seen as adding a dimension to the hypercube. The number of protected attributes therefore defines the dimensionality of the main-hypercube, which contains the whole dataset. The vertices of the hypercube, instead, comprise the intersection of all the protected attributes (e.g., $p_1=0, ..., p_M=0$). On the other hand, each edge contains the intersection of all protected attributes but one. We denote the unspecified protected attribute with the star symbol (*). More generally, if $K$ of the $M$ protected attributes are unspecified (denoted by *), the associated hypercube will be a $K$-dimensional hypercube embedded in the main-hypercube. Each hypercube will then have an associated vector $\mathbf{x} \in \{0,1,*\}^M$, which we refer to as it's vectorial specification. \\ 

We will often speak of the hypercubes as separated by levels. The level refers to the number of stars in the vectorial specification, and encodes the dimensionality of the embedded hypercube. There are $M+1$ possible levels $(0,...,M)$. Level 0 (the lowest level) corresponds to the vertices, which are the deepest intersectional groups and have dimensionality 0. Level $M$ (the highest level) consists only of the main-hypercube, which contains the whole dataset and has dimensionality $M$. The total number of hypercubes (intersectional groups), $H_{tot}$, can be computed by summing the number of hypercubes $H_K$ at each level $K$, with $K\in\{0,\hdots,M\}$:
\begin{equation}
   H_{tot} = \sum_{K=0}^M H_K= \sum_{K=0}^M \binom{M}{K}2^{M-K} = 3^M
\end{equation}

For each vertex $\mathbf{v}$ of the hypercube, we can compute the number of data points belonging to that specific subgroup. We denote this number by $N(\mathbf{v})$. We can similarly compute the number of data points with positive label that fall in the same intersection (vertex), $N^{(1)}(\mathbf{v})$. From these we can compute the success rate for any vertex as

\begin{equation}
SR(\mathbf{v}) = \frac{N^{(1)}(\mathbf{v})}{N(\mathbf{v})}
\end{equation}

Note that it only takes one run through the dataset to compute all success rates at vertex level. We do so by updating a dictionary with keys being all possible vertices. For each of the $N$ instances, we are comparing arrays of length $M$, so the total time complexity is $O(N \times M)$. In the next section, we explain how this can be used to compute the success rates for all hypercubes by propagating the computation upwards.

\section{Propagation Algorithm}
We extend the definitions of $N$ and $N^{(1)}$ to work on any hypercube $\mathbf{x} \in \{0,1,*\}^M$. The definition is the same as the one for vertices, $N(\mathbf{x})$ and $N^{(1)}(\mathbf{x})$ are respectively the number of datapoints and successful datapoints on a given structure. From these we can compute the success rate of a hypercube as

\begin{equation}
SR(\mathbf{x}) = \frac{N^{(1)}(\mathbf{x})}{N(\mathbf{x})}
\end{equation}

If $\mathbf{x}$ is fully unspecified, the associated structure is the main-hypercube, in this case we use special notation $SR(\mathbf{x})=SR_{tot}$. Let $\mathbf{x}$ be a hypercube which is not a vertex. Then by definition $\mathbf{x}$ contains at least one star. Pick the first star in $\mathbf{x}$ and consider the two sub-hypercubes contained by setting that star to 0 or 1, which we denote as $\mathbf{x}^{(0)}$ and $\mathbf{x}^{(1)}$ respectively. $\mathbf{x}^{(0)}$ and $\mathbf{x}^{(1)}$ are a partition of $\mathbf{x}$, meaning that they are disjoint and their union is $\mathbf{x}$. So the following formulas immediately follow
\begin{equation}
N(\mathbf{x})=N(\mathbf{x}^{(0)})+N(\mathbf{x}^{(1)})
\end{equation}
and
\begin{equation}
N^{(1)}(\mathbf{x})=N^{(1)}(\mathbf{x}^{(0)})+N^{(1)}(\mathbf{x}^{(1)})
\end{equation}
Using these, we can also compute the success rate for hypercube $\mathbf{x}$, as $SR(\mathbf{x})=N^{(1)}(\mathbf{x})/N(\mathbf{x})$. With equations 4 and 5, we are in a position to devise a dynamic programming algorithm for computing all the success rates.\\

To help in visualizing the propagation process, we think of each possible split as a branching on a network graph (See Fig. \ref{fig:bintree}). We refer to the resulting graph $G$ as the hypercube graph. It's nodes consist of all possible hypercubes, and the edges encode how these hypercubes split into lower dimensional ones. It is useful to separate the nodes of the graph by levels when visualising it. If we have $M$ protected attributes, the resulting graph $G$ has $3^M$ nodes in total, with $H_K=\binom{M}{K}2^{M-K}$ of them at level $K$.\\

\begin{figure}
    \centering
\includegraphics[width=8cm]{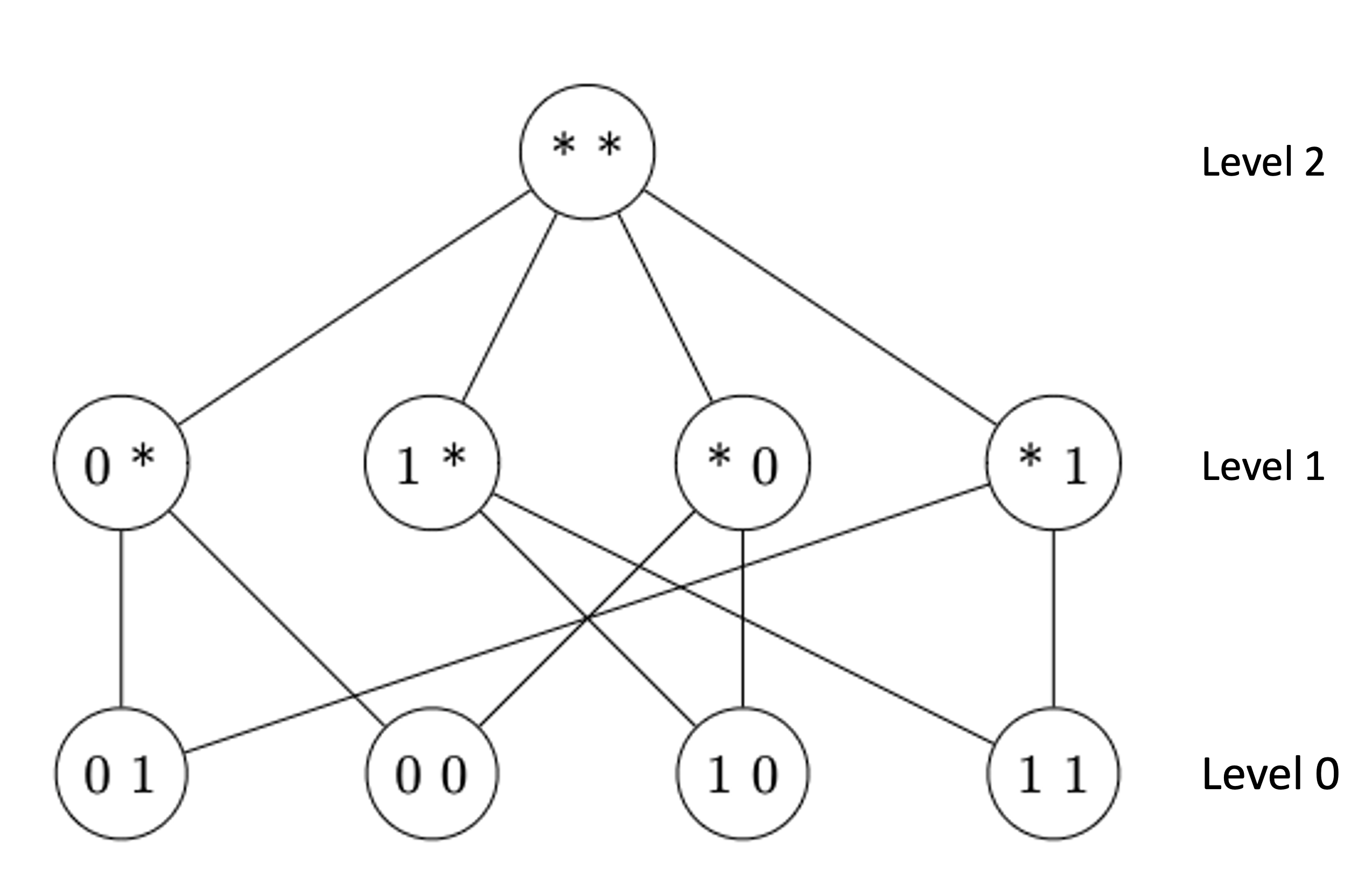}
\caption{\textbf{Hypercube graph}. Example of a hypercube graph $G$ with $M=2$ protected attributes. The nodes of the graph are all hypercubes. The edges represent the possible splits.}
\label{fig:bintree}
\end{figure}

\textbf{Algorithm 1.} We start by computing $N(\mathbf{v})$ and $N^{(1)}(\mathbf{v})$ for all vertices $\mathbf{v}$. If there is no data on a given vertex, we store a value of 0 for both of these. Then, \textbf{while} there exists a hypercube with uncomputed values. Since this hypercube is not a vertex, it's vector must contain a star. Pick the first star within the vector and let $\mathbf{x}^{(0)}$ and $\mathbf{x}^{(1)}$ be the hypercubes obtained by setting the star to 0, 1 respectively. We recursively compute the result of equations 4 and 5. Use these to store the success rate if needed. \textbf{Finish}. See Algorithm \ref{alg:cap} for pseudo-code.\\

\begin{algorithm}
\caption{$SR$ propagation}\label{alg:cap}
\begin{algorithmic}[1]
\State \textbf{Initialize} $to\_compute$ as a set containing all $3^M$ hypercube vectors.
\State \textbf{Initialize} $N, N^{(1)}$ as dictionaries with keys being all elements of $to\_compute$.
\State Loop once through dataset updating $N, N^{(1)}$ at vertex level.
\State Remove all vertex vectors from $to\_compute$.
\While {$to\_compute$ non empty}
\State Pick first element $\mathbf{x}$ of $to\_compute$.
\State Find first star in $\mathbf{x}$ and decompose into $\mathbf{x}^{(0)}$ and $\mathbf{x}^{(1)}$.
\State Recursively compute values of $N, N^{(1)}$ using equations 4 and 5.
\State Remove $\mathbf{x}$ from $to\_compute$.
\EndWhile
\end{algorithmic}
\end{algorithm}
In Appendix \ref{appendix:complexity}, we include a study of the complexity of the propagation algorithm, as opposed to the "brute force" approach. The results of the analysis are summarised in Table \ref{tab:complexitytable}. The main takeaway is that for large $N$ and $3^M$, the difference between the multiplicative ("brute force") and additive ("propagation") complexities becomes considerable.
\begin{table}
    \centering
\begin{tabular}{|c|c|} 
\hline
  \textbf{Algorithm} & \textbf{Complexity}\\ 
\hline
  Brute Force Approach & $O(N\times 3^M)$\\ 
  \hline
  $SR$ Propagation & $O(N+3^M)$\\ 
  \hline
\end{tabular}
\caption{\textbf{Complexity Table.} Brute force approach consists of filtering the dataset for each intersectional subgroup, and computing success rate on filtered data. $SR$ Propagation first computes success rates at vertex level, and then propagates them upwards (Algorithm \ref{alg:cap}).}
\label{tab:complexitytable}
\end{table}

\section{Fairness Propagation}
\subsection{Equality of Outcome}
We are given a dataset $\mathcal{D}$ of size $N$ with $M$ binary protected attributes $p_1, ..., p_M$ and one binary label $\mathbf{y}$. Define the minimum and maximum success at level $K$ as
\begin{equation}
SR_{min}(K) = \min_{\mathbf{x} \in \textrm{Level } K} SR(\mathbf{x})
\end{equation}
\begin{equation}
SR_{max}(K) = \max_{\mathbf{x} \in \textrm{Level } K} SR(\mathbf{x})
\end{equation}
In Theorem 1, we will show that the minimum and maximum success rates must narrow down as we go up the levels. This result is similar to (Theorem IV.1. \cite{Foulds2020}), although derived in a different framework.\\
\begin{theorem}
$SR_{min}(K)$ is an increasing function of $K$ and $SR_{max}(K)$ is a decreasing function of $K$. In language, we refer to this result as: "success rates narrow as we go up the levels".
\end{theorem}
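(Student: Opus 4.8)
The plan is to reduce everything to a statement about a single step up the levels, and then iterate. Concretely, it suffices to show that for any hypercube $\mathbf{x}$ at level $K \geq 1$, its success rate $SR(\mathbf{x})$ lies between $SR_{min}(K-1)$ and $SR_{max}(K-1)$; taking the min (resp.\ max) over all $\mathbf{x}$ at level $K$ then gives $SR_{min}(K) \geq SR_{min}(K-1)$ and $SR_{max}(K) \leq SR_{max}(K-1)$, which is exactly the monotonicity claimed.

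First I would invoke the partition structure already established in the Propagation Algorithm section. Given $\mathbf{x}$ at level $K \geq 1$, pick its first star and split into $\mathbf{x}^{(0)}$ and $\mathbf{x}^{(1)}$, which lie at level $K-1$. By equations 4 and 5 we have $N(\mathbf{x}) = N(\mathbf{x}^{(0)}) + N(\mathbf{x}^{(1)})$ and $N^{(1)}(\mathbf{x}) = N^{(1)}(\mathbf{x}^{(0)}) + N^{(1)}(\mathbf{x}^{(1)})$, so
\begin{equation}
SR(\mathbf{x}) = \frac{N^{(1)}(\mathbf{x}^{(0)}) + N^{(1)}(\mathbf{x}^{(1)})}{N(\mathbf{x}^{(0)}) + N(\mathbf{x}^{(1)})}.
\end{equation}
This is a weighted average (a mediant) of $SR(\mathbf{x}^{(0)})$ and $SR(\mathbf{x}^{(1)})$ with nonnegative weights $N(\mathbf{x}^{(0)})$ and $N(\mathbf{x}^{(1)})$: indeed $SR(\mathbf{x}) = \lambda\, SR(\mathbf{x}^{(0)}) + (1-\lambda)\, SR(\mathbf{x}^{(1)})$ with $\lambda = N(\mathbf{x}^{(0)})/N(\mathbf{x})$. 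Hence $SR(\mathbf{x})$ lies between $SR(\mathbf{x}^{(0)})$ and $SR(\mathbf{x}^{(1)})$, and in particular $SR_{min}(K-1) \leq SR(\mathbf{x}) \leq SR_{max}(K-1)$ since both children are level-$(K-1)$ hypercubes. Then I would take minima and maxima over level $K$ and chain the inequalities from $K=1$ up to $K=M$ to conclude.

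The main obstacle is the degenerate case of empty subgroups: if $N(\mathbf{x}^{(0)}) = 0$ or $N(\mathbf{x}^{(1)}) = 0$, the success rate of that child is undefined (division by zero), and the convex-combination argument needs care. I would handle this by restricting all minima and maxima in the definitions of $SR_{min}(K)$ and $SR_{max}(K)$ to hypercubes with $N(\mathbf{x}) > 0$ (empty intersectional groups carry no success-rate information), and noting that if exactly one child is empty then $SR(\mathbf{x})$ equals the success rate of the nonempty child, so the betweenness statement still holds trivially; if $N(\mathbf{x}) > 0$ then at least one child is nonempty. A second, minor point worth stating explicitly is that the mediant bound does not depend on which star we split on — any partition of $\mathbf{x}$ into level-$(K-1)$ pieces works the same way — so the choice of "first star" in Algorithm 1 is immaterial to the inequality. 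With these caveats addressed, the argument is just the elementary fact that a mediant of two fractions lies between them, applied inductively along the levels.
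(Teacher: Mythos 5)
Your proposal is correct and follows essentially the same route as the paper: write $SR(\mathbf{x})$ as a weighted average of the success rates of the two level-$(K-1)$ sub-hypercubes obtained by resolving one star, bound it between them, and take minima and maxima over each level (the paper additionally records the tighter bound from all $K$ possible splits, but explicitly notes that a single split suffices, which is exactly what you use). Your explicit treatment of empty subgroups is a point of care the paper's proof silently skips over.
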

\begin{proof}

Let $\mathbf{x}$ be a non-vertex hypercube (of dimension $K$) and let $\mathbf{x}_i^{(0)}$ and $\mathbf{x}_i^{(1)}$ be a partition of it using the $i^{th}$ star. Recall that we can compute the success rate as
\begin{equation}
SR(\mathbf{x}) = \frac{N^{(1)}(\mathbf{x})}{N(\mathbf{x})}
\end{equation}
Using equation 5, this is equivalent to
\begin{equation}
SR(\mathbf{x}) = \frac{N^{(1)}(\mathbf{x}_i^{(0)})+N^{(1)}(\mathbf{x}_i^{(1)})}{N(\mathbf{x})}
\end{equation}
Which can be rewritten as
\begin{equation}
SR(\mathbf{x}) = \frac{N(\mathbf{x}_i^{(0)})}{N(\mathbf{x})}SR(\mathbf{x}_i^{(0)}) + \frac{N(\mathbf{x}_i^{(1)})}{N(\mathbf{x})}SR(\mathbf{x}_i^{(1)})
\end{equation}
Note that equation 10 is a weighed average of the success rates of the two sub-hypercubes. The weights are given by the respective proportions of datapoints coming from each of the two sub-hypercubes. A weighed average lies between the minimum and maximum values we are averaging over. So by looking at the $i^{th}$ split, we obtain the following bound on the success rate $SR(\mathbf{x})$.
\begin{equation}
\min_j SR(\mathbf{x}_i^{(j)}) \leq SR(\mathbf{x}) \leq \max_j SR(\mathbf{x}_i^{(j)})
\end{equation}
If we instead choose to look at all $K$ ways of splitting the hypercube $\mathbf{x}$, we obtain a better bound.
\begin{equation}
\max_i \min_j SR(\mathbf{x}_i^{(j)}) \leq SR(\mathbf{x}) \leq \min_i \max_j SR(\mathbf{x}_i^{(j)})
\end{equation}
Using equation 12, we conclude the proof of the theorem.
\begin{equation}
SR_{min}(K) = \min_{\mathbf{x} \in \textrm{Level } K} SR(\mathbf{x}) \geq \min_{\mathbf{x} \in \textrm{Level } K} \max_i \min_j SR(\mathbf{x}_i^{(j)})
\geq \min_{\mathbf{x} \in \textrm{Level } K-1} SR(\mathbf{x})=SR_{min}(K-1)
\end{equation}
\begin{equation}
SR_{max}(K) = \max_{\mathbf{x} \in \textrm{Level } K} SR(\mathbf{x}) \leq \max_{\mathbf{x} \in \textrm{Level } K} \min_i \max_j SR(\mathbf{x}_i^{(j)})
\leq \max_{\mathbf{x} \in \textrm{Level } K-1} SR(\mathbf{x})=SR_{max}(K-1)
\end{equation}
\end{proof}

Please note that we didn't need to use all $K$ ways of splitting (equation 12) to obtain the above result, picking any one of them (equation 11) would have sufficed. We choose to include all ways of splitting within the formula, because it highlights an important property of how the success rates relate across levels. That is that each hypercube at level $K$ is constrained in $K$ different ways.\\

If we further define the worst case disparate impact \cite{Feldman2014} and the worst case statistical parity \cite{Feldman2014} at level $K$ as
\begin{equation}
DI(K) = \frac{SR_{min}(K)}{SR_{max}(K)}
\end{equation}
and
\begin{equation}
SP(K) = SR_{max}(K)-SR_{min}(K)
\end{equation}
We can prove the following corollary:
\begin{corollary}
$DI(K)$ increases with increasing $K$, and $SP(K)$ decreases with increasing $K$. In language, we refer to these results as: "Equality of Outcome fairness propagates up the levels".
\end{corollary}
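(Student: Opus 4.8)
The plan is to derive the corollary directly from Theorem 1 together with the algebraic definitions of $DI(K)$ and $SP(K)$; no new argument about the hypercube structure is needed, since Theorem 1 already supplies the two facts $SR_{min}(K) \geq SR_{min}(K-1)$ and $SR_{max}(K) \leq SR_{max}(K-1)$. Thus the whole corollary reduces to feeding these two monotonicity statements through the definitions and checking that the ratio and the difference inherit monotonicity. I would prove the one-step inequalities $SP(K) \leq SP(K-1)$ and $DI(K) \geq DI(K-1)$, then chain them over $K$.

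First I would handle $SP(K) = SR_{max}(K) - SR_{min}(K)$, which is immediate: combining $SR_{max}(K) \leq SR_{max}(K-1)$ with $-SR_{min}(K) \leq -SR_{min}(K-1)$ gives $SR_{max}(K) - SR_{min}(K) \leq SR_{max}(K-1) - SR_{min}(K-1)$, i.e. $SP(K) \leq SP(K-1)$, so $SP$ is non-increasing in $K$.

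Next I would treat $DI(K) = SR_{min}(K)/SR_{max}(K)$. Here I would first record the standing convention (implicit already in Theorem 1) that every hypercube under consideration carries at least one datapoint, so all success rates are well defined, $SR_{max}(K) > 0$, and $SR_{min}(K) \geq 0$; this is what makes $DI(K)$ meaningful. Then I would invoke the elementary monotonicity of $(a,b) \mapsto a/b$: if $a' \geq a \geq 0$ and $0 < b' \leq b$, then $a'/b' \geq a/b' \geq a/b$. Applying this with $a' = SR_{min}(K)$, $a = SR_{min}(K-1)$, $b' = SR_{max}(K)$, $b = SR_{max}(K-1)$ yields $DI(K) \geq DI(K-1)$, so $DI$ is non-decreasing in $K$. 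Iterating both one-step inequalities gives the global statements.

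The argument is essentially mechanical, so there is no genuine obstacle; the only point deserving care is the well-definedness of $DI(K)$ — one must either assume nonempty subgroups or restrict the $\min$/$\max$ to populated hypercubes, exactly as is tacitly done in Theorem 1. If empty subgroups were allowed, the statement would need rephrasing in terms of populated hypercubes only, but under the paper's running convention the short computation above is all that is required.
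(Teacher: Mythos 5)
Your proposal is correct and follows essentially the same route as the paper: the corollary is derived purely by feeding the two monotonicity facts from Theorem 1 through the definitions of $DI(K)$ and $SP(K)$, using that a ratio increases when its numerator increases and its positive denominator decreases. Your additional remark about well-definedness (nonempty subgroups so that $SR_{max}(K)>0$) is a reasonable point of care that the paper leaves implicit, but it does not change the argument.
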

\begin{proof}
As follows from Theorem 1:
\begin{equation}
DI(K) = \frac{SR_{min}(K)}{SR_{max}(K)} \geq \frac{SR_{min}(K-1)}{SR_{max}(K-1)} = DI(K-1)
\end{equation}
and
\begin{equation}
SP(K) = SR_{max}(K)-SR_{min}(K) \leq SR_{max}(K-1)-SR_{min}(K-1) = SP(K-1)
\end{equation}
\end{proof}

\subsection{Equality of Opportunity}
In Appendix \ref{appendix:eoo} we extend the above analysis to the Equality of Opportunity setting. We do so by adapting Theorem 1, to show that other fairness measures also narrow as we progress up the levels such as: accuracy, precision, true positive rates, false positive rates, true negative rates, false negative rates \cite{hardt2016equality}. In language, these results can be summarized as follows: "\textit{Equality of Opportunity fairness also propagates up the levels}".

\section{Intersectional Statistical Parity}
In the previous sections, we used a minimum and maximum approach, to get worst case scenario proofs about the evolution of fairness metrics as we progress up the levels. In this section, we use a statistical interpretation of our framework. First, it will be useful to introduce some notation from probability theory. We let $A_1,\hdots,A_M$ denote the random variables associated with our $M$ protected attributes and $Y$ be the random variable associated to the label. Intersectional Statistical Parity (ISP) holds when there is independence between the protected attributes (including intersections of an arbitrary number of them) and the output label (\cite{dwork2012fairness, agarwal2018reductions, Kearns2018, Foulds2020}). In equations,
\begin{equation}
    P(Y|A_{i_1},\hdots,A_{i_p})=P(Y)
\end{equation}
In the presence of ISP, every single hypercube has precisely the same probability of success $P(Y=1)=p_{tot}$. For each hypercube $\mathbf{x}$, the success rate $SR(\mathbf{x})$ can therefore be seen as the sample mean of $N(\mathbf{x})$ draws of a $Bernoulli(p_{tot})$ random variable. Hence, the mean and variance are given by:
\begin{equation}
\mathbb{E}[SR(\mathbf{x})]=p_{tot} 
\end{equation}
and
\begin{equation}
Var [SR(\mathbf{x})]=\frac{p_{tot}(1-p_{tot})}{N(\mathbf{x})}
\end{equation}



We first consider the simplified case where each subgroup on a given level has an equal number of instances. In this case, the number of datapoints  for each hypercube at level $K$ is exactly $N_K=N2^{K-M}$ (Appendix \ref{appendix:avgnum}). In this scenario, all success rates $SR(\mathbf{x})$ have the same distribution, with mean and variance given by equations 20 and 21 respectively and $N(\mathbf{x}) = N_K$. The theoretical value for the variance (at level $K$) is then:
\begin{equation}
Var_{ISP}(K) = \frac{p_{tot}(1-p_{tot})}{N_K} =\frac{2^Mp_{tot}(1-p_{tot})}{N}\times 2^{-K} 
\end{equation}
Note that the variance decreases exponentially with increasing levels. By setting $\alpha = \frac{2^Mp_{tot}(1-p_{tot})}{N}$ and transforming to the log domain, we get a simpler formulation
\begin{equation}
    \log Var_{ISP}(K) = \log \alpha - K \log 2
\end{equation}
This means that, if ISP holds, the log of the variance at level $K$ decreases linearly with increasing $K$. The slope of the linear curve is $-\log 2$ and the intercept is $\log \alpha$.\\

Supposing we are now in possession of data, obtained from an ISP classifier. We can use the empirical success rates at each level to compute an estimate of the variance as:
\begin{equation}
    \frac{1}{H_K}\sum_{\mathbf{x}\in\textrm{Level }K}(SR(\mathbf{x})-\overline{SR}(K))^2
\end{equation}
The above formula is known to be an estimator of the theoretical variance, under the assumption of independent samples. However, on any level above Level 0, the samples are technically not independent. That is because different hypercubes share vertices and we find that especially at very high levels, the approximation of assuming independence breaks down. We mitigate this issue by using sub-sampling. By sub-sampling our dataset several times, we solve two problems at once. The first is that we ensure the dataset is balanced, which is a crucial assumption in the derivations. The second is that we produce more samples, that are less correlated overall. We call the empirical estimate of the variance obtained using sub-sampling $Var(K)$. If we subsample our data $n_{sub}$ times, $Var(K)$ can be defined mathematically as
\begin{equation}
    Var(K) = \frac{1}{n_{sub}\times H_K}\sum_{i=1}^{n_{sub}}\sum_{\mathbf{x}\in\textrm{Level }K}(SR_i(\mathbf{x})-\overline{SR}(K))^2
\end{equation}



What happens if ISP does not hold? In this case, the set of success rates $SR(\mathbf{x})$ at each level is made up of copies of random variables with different means. Since the means are different, we expect the variance to be greater than the theoretically computed minimum $Var_{ISP}(K)$. We can then use this theoretical value for the variance as a benchmark, to define a measure of intersectional bias at each level. We call this family of metrics $VarRatio(K)$, and define it mathematically as
\begin{equation}
    VarRatio(K)=\frac{Var(K)}{Var_{ISP}(K)}
\end{equation}
In this paper, we restrict our focus to the metric obtained at vertex level, for $K=0$. Once again using $\alpha = \frac{2^Mp_{tot}(1-p_{tot})}{n_{sub}}$ to simplify the expression, the metric can be written as:
\begin{equation}
    VarRatio(0)=\frac{Var(0)}{\alpha}
\end{equation}
There are three cases that can arise:
\begin{enumerate}
    \item $VarRatio(0)\approx1$ indicating the classifier satisfies (or is close to satisfying) ISP.
    \item $VarRatio(0)>1$ indicating the classifier contains some intersectional bias.
    \item $VarRatio(0)<1$ hinting that the classifier was trained or post-processed with some bias mitigation procedures.
\end{enumerate}

Please note that if the data is unbalanced and we do not use a sub-sampling approach, $Var_{ISP}(K)$ still serves as an approximate lower bound for the expected variance at Level $K$ (Appendix \ref{appendix:varformula}).\\

In conclusion, we showed that, under perfect fairness, the variance of the empirical success rates decreases exponentially across levels. This means that the log-variance against level curve is linear, and this can be used as a test for the ISP criterion. These theoretical findings helped us define a family of metrics $VarRatio(K)$ which capture intersectional bias, and because of how they are constructed, we expect them to be less affected by small sample variance than other approaches within the literature. 
\section{Experiments}
\subsection{Synthetic Data}
\begin{figure}[h]
\centering
\includegraphics[width=7cm]{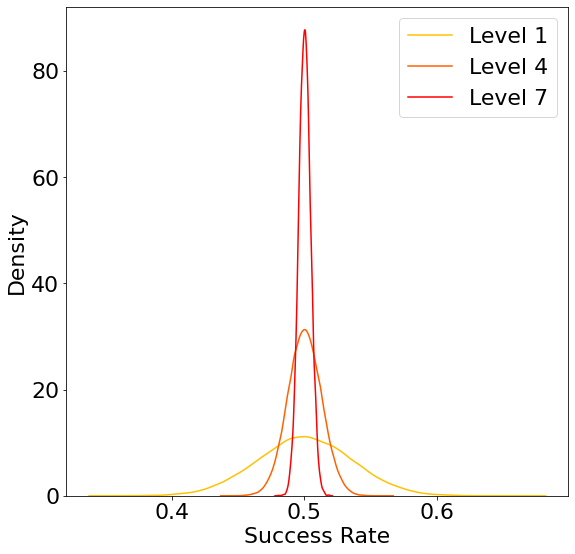}
\caption{\textbf{Success Rate Distributions}. Estimated empirical distributions of the success rates over all the hypercubes at levels 1, 4 and 7 for Experiment 1. The distributions get narrower around $SR_{tot}=0.5$ as we travel up the levels.}
\label{fig:dist}
\end{figure}



In order to verify our theoretical results, we run experiments with synthetic data. We create a function to generate random datasets, with $M=10$ protected attributes, and one label. Each of the $2^{10}=1024$ smallest intersectional groups (vertices), $\mathbf{v}$, contains $N(\mathbf{v})=200R$ instances, where $R$ is a randomly selected integer between 1 and 10. Each vertex also has an associated probability of success $p(\mathbf{v})$, which we use to sample its binary label data from a $Bernoulli(p(\mathbf{v}))$ distribution. We sub-sample the dataset $n_{repeats}=20$ times so as to have all vertices contain precisely $n_{sub}=100$ datapoints. The sub-sampling is crucial for two reasons. The first is that it allows us to generate a larger number of samples, with minimal amounts of correlation. The second is that it ensures all hypercubes have equal number of datapoints. Both of these conditions were assumptions in our theoretical derivations.
\\

In this first example, we will consider a fair dataset, where $p(\mathbf{v})=0.5$ for all vertices. Since the output label is here independent of all the protected attributes, this example satisfies Intersectional Statistical Parity. We run the Propagation Algorithm \ref{alg:cap} to compute the empirical success rates over all the possible intersectional groups, that is, all the hypercubes in our geometrical model. Fig. \ref{fig:dist} shows the distribution of the empirical success rates at different levels. As predicted by our theoretical results, the distribution gets narrower around $SR_{tot}=0.5$ as we go up the levels. Fig. \ref{fig:singlesynth1sub1} shows how the minimum and maximum values evolve as we progress up the levels. The maximum value decreases and minimum value increases, as shown in Theorem 1. Fig. \ref{fig:singlesynth1sub2} and Fig. \ref{fig:singlesynth1sub3} show plots of $Var(K)$ and $\log Var(K)$ as a function of $K$. Both curves match the theoretical results precisely, except for small numerical approximations.\\

\begin{figure}[H]
\centering
\begin{subfigure}{\linewidth}
\centering
\includegraphics[width=14.4cm]{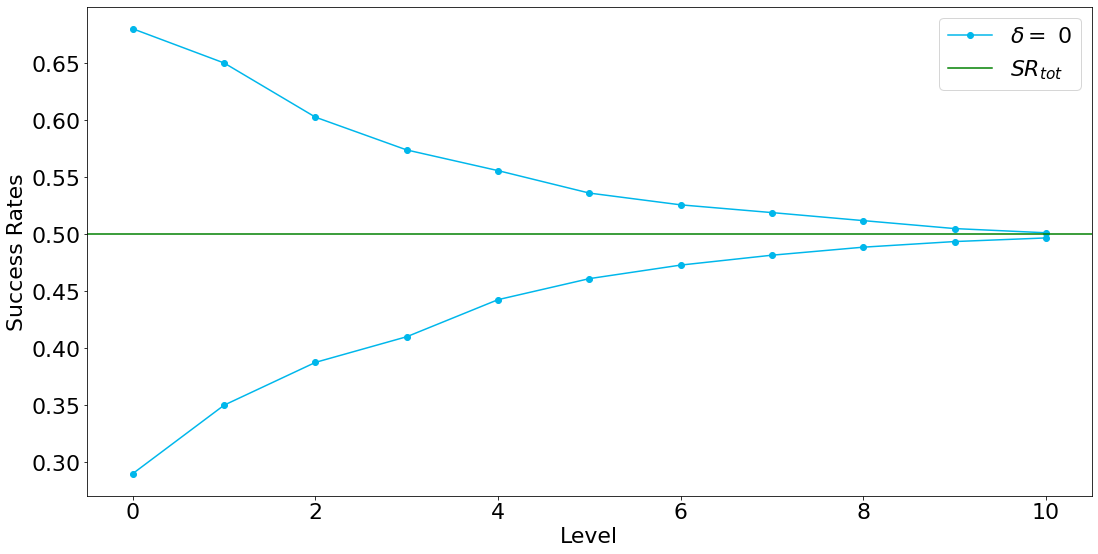}
\caption{minimum and maximum}
\label{fig:singlesynth1sub1}
\end{subfigure}
\begin{subfigure}{0.45\linewidth}
\centering
\includegraphics[width=7.3cm]{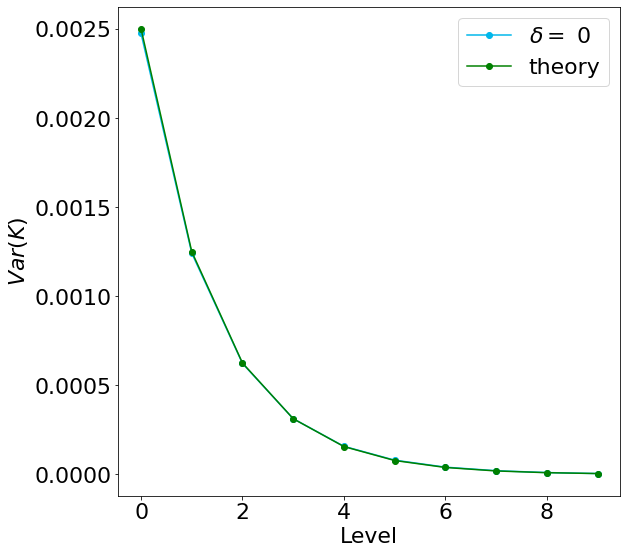}
\caption{variance}
\label{fig:singlesynth1sub2}
\end{subfigure}
\begin{subfigure}{0.45\linewidth}
\centering
\includegraphics[width=7cm]{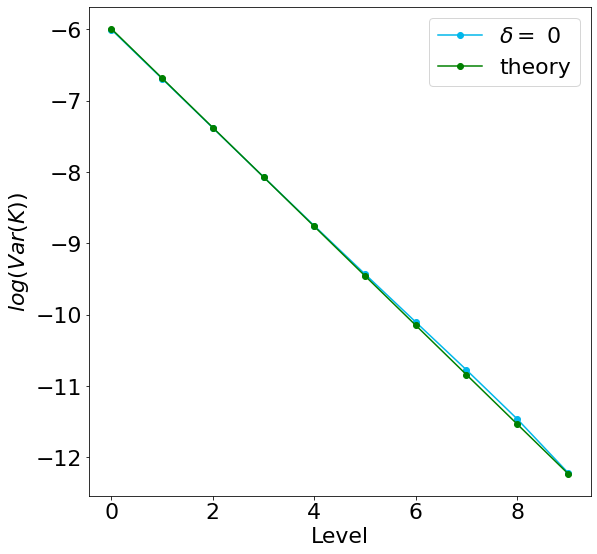}
\caption{log variance}
\label{fig:singlesynth1sub3}
\end{subfigure}
\caption{\textbf{Experiment 1}. (a) Evolution of the minimum and maximum empirical success rates across levels. As expected from the theoretical results, these values tend to 0.5. (b) Evolution of the Variance across levels. The variance decreases exponentially, as predicted by our theoretical results. (c) Evolution of the log-Variance across levels. The log-variance linearly, as predicted by our theoretical results.}
\label{fig:synth1data}
\end{figure}

In our second experiment, we want to analyse how these results change in the presence of bias. We model bias by allowing each vertex $\mathbf{v}$ to have a different probability of success $p(\mathbf{v})$.
Specifically, we select a set of $100$ vertices to have probability of success $p(\textbf{v})=0.5 - \delta$, and a separate set of $100$ vertices to have probability of success $p(\textbf{v})=0.5 + \delta$. We set our parameter $\delta$ to vary in the interval $\{0, 0.1, 0.2, 0.3, 0.4\}$. When $\delta=0$, we retrieve the case of perfect fairness from the previous example. Otherwise, ISP will not be satisfied, and indeed our dataset will get less and less fair as $\delta$ increases. The results show that, as expected, the minimum and maximum success rates narrow down as fairness increases, that is, for lower values of $\delta$ (Fig. \ref{fig:synthsub1}). We also show that, just as our theoretical results had predicted, the variance of the empirical success rates increases with higher levels of bias (Fig. \ref{fig:synthsub2}). Accordingly, the evolution of the log-variance across levels is not linear but concave for higher levels of $\delta$, another indicator that ISP is not satisfied. Finally, we calculate our metric $VarRatio(0)$ (Table \ref{tab:exp2metricstable}), and find that it increases for increasing values of $\delta$, which indicate larger bias.

\begin{figure}
\centering
\begin{subfigure}{\linewidth}
\centering
\includegraphics[width=14.4cm]{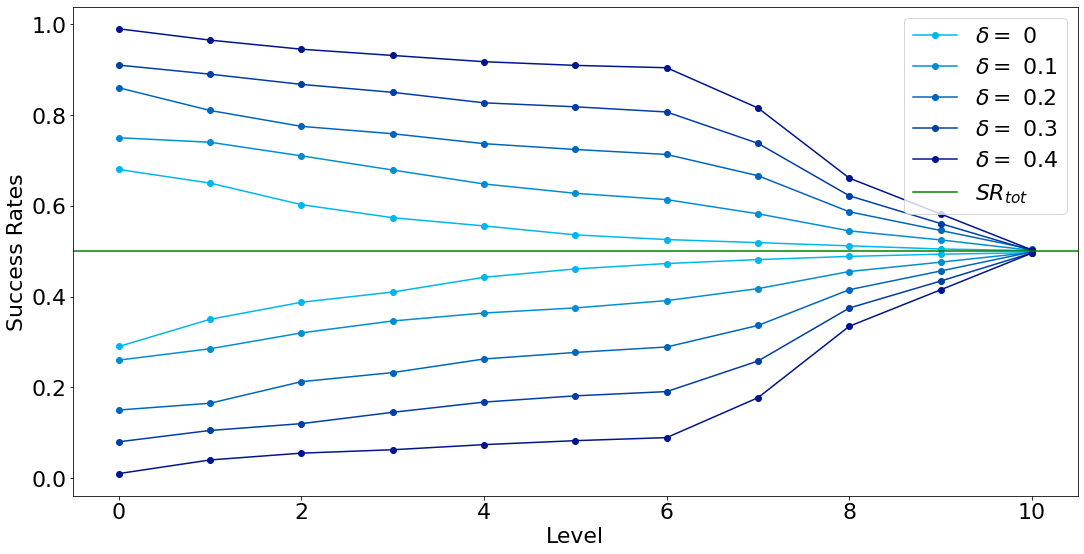}
\caption{minimum and maximum}
\label{fig:synthsub1}
\end{subfigure}
\begin{subfigure}{0.45\linewidth}
\centering
\includegraphics[width=7.3cm]{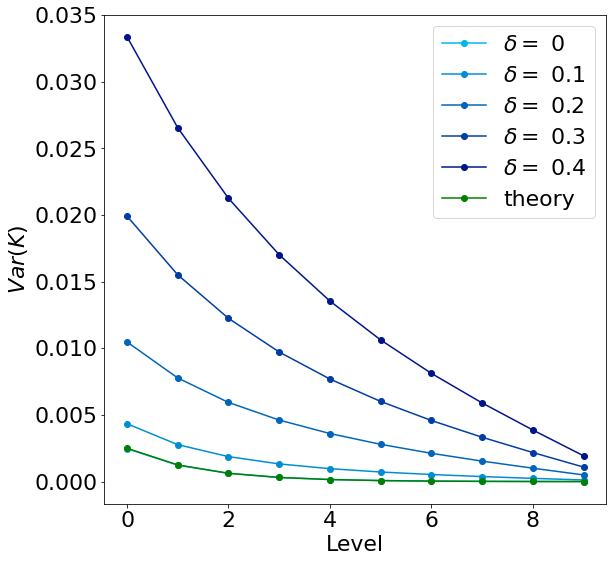}
\caption{variance}
\label{fig:synthsub2}
\end{subfigure}
\begin{subfigure}{0.45\linewidth}
\centering
\includegraphics[width=7cm]{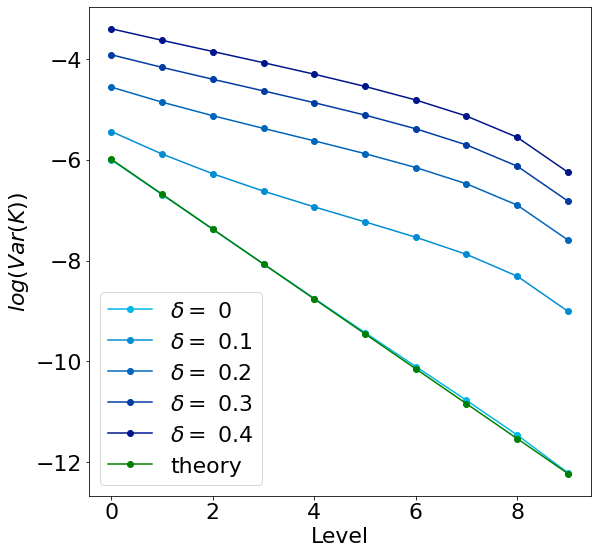}
\caption{log variance}
\label{fig:synthsub3}
\end{subfigure}
\caption{\textbf{Experiment 2}. (a) Evolution of the minimum and maximum empirical success rates across levels. As expected from the theoretical results, these values tend to 0.5. These values narrow down for smaller $\delta$, as bias decreases. (b) Evolution of the Variance across levels. The variance increases in the presence of bias. (c) Evolution of the log-Variance across levels. The log-variance curve departs from the theoretical value as we increase $\delta$.}
\label{fig:fakedata}
\end{figure}

\begin{table}
    \centering
\begin{tabular}{|c|c|c|c|c|c|c} 
\hline
  \textbf{ } & \textbf{$\delta=0$} & \textbf{$\delta=0.1$} & \textbf{$\delta=0.2$} & \textbf{$\delta=0.3$} & \textbf{$\delta=0.4$}\\ 
\hline
  Level 0  & $0.98$ & $1.73$ & $4.19$ & $7.96$ & $13.34$ \\ 
  \hline
\end{tabular}
\caption{\textbf{Metrics Table}. $VarRatio(0)$ calculated for experiment 2. }
\label{tab:exp2metricstable}
\end{table}

\subsection{Real Data Experiment}
For the real data test, we chose to use the Adult dataset. The Adult dataset is one of the most commonly used datasets in fair-AI research. That is because it has many attributes that can be considered sensitive (sex, race, age, etc) and the task is that of binary classification. The dataset consists of $N=48842$ instances with 14 features and one label feature. The usual task given on this dataset is the prediction of the binary attribute of salary being above 50K from the other features.

For the purposes of our experiments, we performed some simple prepossessing on the data. The first step is we keep only 5 of the 15 provided columns. That is $M=4$ protected attributes (sex, race, age, marital-status) and the binary label (class). We also performed some grouping on the attributes age, race and marital-status. For age, we binarize the data so that instances with age $\geq 40$ are set to 1 and all others are set to 0. For race we grouped Black, Asian-Pac-Islander and Amer-Indian-Eskimo with the Other group. For marital-status we grouped the 7 provided statuses (Married-civ-spouse, Never-married, Divorced, Separated, Widowed, Married-spouse-absent, Married-AF-spouse) into Married and Not-Married. The reason we had to perform grouping on some attributes is to match the setting of the theoretical work (all protected attributes binary). Furthermore the grouping ensures all intersectional subgroups contain sufficient data. For the average and minimum number of datapoints over all hypercubes per level, see Table \ref{tab:realnums1}.

\begin{table}[h]
\centering
\begin{tabular}{lccccc}
\toprule
{} &  Level 0 &     Level 1 &  Level 2 &  Level 3 &  Level 4 \\
\midrule
\textit{Average Number} &  3052 &     6105 &  12210 &  24421 &  48842\\
\textit{Minimum Number} &  228 &     521 & 2511 &  7080 &  48842\\
\bottomrule
\end{tabular}
\caption{\textbf{Adult Dataset}. Minimum number and average number of datapoints at each level $K=0,\hdots,4$.}
\label{tab:realnums1}
\end{table}

\begin{figure}[h]
\centering
\begin{subfigure}{\linewidth}
\centering
\includegraphics[width=14.4cm]{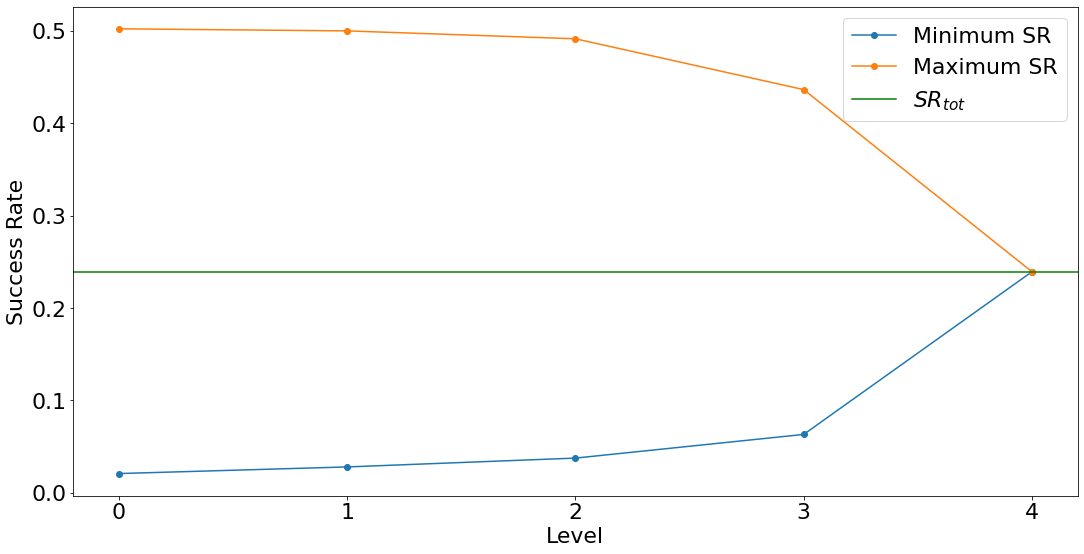}
\caption{minimum and maximum}
\label{fig:realsub1}
\end{subfigure}\\
\begin{subfigure}{0.45\linewidth}
\centering
\includegraphics[width=7.3cm]{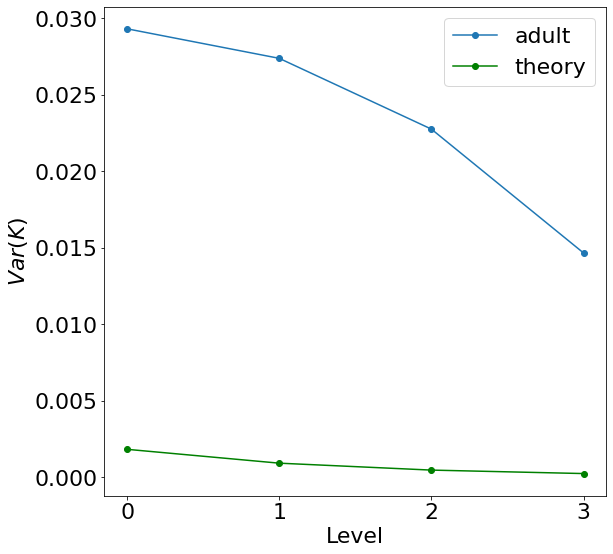}
\caption{variance}
\label{fig:realsub2}
\end{subfigure}
\begin{subfigure}{0.45\linewidth}
\centering
\includegraphics[width=7cm]{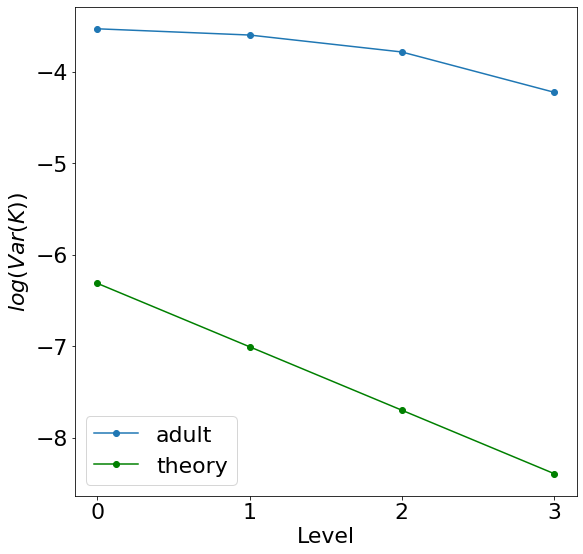}
\caption{log variance}
\label{fig:realsub3}
\end{subfigure}
\caption{\textbf{Adult Dataset Experiment}. (a) Evolution of the minimum and maximum empirical success rates across levels. (b) Evolution of the Variance across levels. (c) Evolution of the log Variance across levels. }
\label{fig:realdata}
\end{figure}

The following results are obtained by sub-sampling the dataset $n_{repeats}=20$ times so that each vertex contains exactly $n_{sub}=100$ samples. In Fig. \ref{fig:realsub1} the minimum and maximum success rates are displayed as a function of level. As we have shown, the success rates must narrow with increasing levels, although they narrow more slowly for the Adult dataset. Fig. \ref{fig:realsub2} and Fig. \ref{fig:realsub3}  show the evolution of the variance and log-variance across levels. Observe the big difference in variance between the theoretically computed values (obtained under ISP condition) and Adult dataset values. All results indicate the presence of intersectional bias within the data.

\section{Fractal Fairness}
Intersectional fairness in the context of AI typically focuses on splitting groups of people into smaller identity categories by intersecting multiple protected attributes. Although it is commendable that the issue of intersectionality is being tackled by the community, we feel that the current approaches have some limitations. For example, it is not clear how many protected attributes one should consider, which protected attributes one should consider and how to select the level of granularity for the analysis \cite{Kong2022}. \\ 

On the other hand, we sought a framework that would allow us to think of intersectional fairness \textit{as a whole}: one complex system involving all possible intersectional groups at all levels of description. We turned to geometry, and particularly fractals, for inspiration. Fractals are geometrical objects that, informally, can be broken down into smaller objects that resemble the original shape \cite{mandelbrot1982fractal}. Although our framework is not mathematically a fractal, we identified a geometrical structure where each hypercube separates into further instances of hypercubes. This provided us with a mathematical model to study a nested and recursive notion of fairness. In fractals, patterns at a small scale repeat at a larger scale. Similarly, our framework asks that our data is fair for any possible subgroup of the population, no matter which or how many attributes are intersected. \\ 

Furthermore, the geometrical nature of our setting allows us to examine how fairness propagates and scales. We found that fairness necessarily propagates up the levels, while bias propagates down. This suggests that we should, first and foremost, think of fairness in a bottom-up and local way. It also indicates that we cannot think of individual parts of a system in isolation from the whole. We propose that the trustworthiness of an AI system cannot thus be enforced in a top-down way, or by separating individual components from the larger system. Instead, we argue for a bottom-up emergent and holistic approach to trustworthy AI.  

\section{Discussion}
In this paper, we introduced a geometrical setting for studying the fairness properties of all possible intersectional subgroups together, in a unified framework. There are multiple advantages to this setting. Firstly, it allows for a quicker computation of all the success rates (or other metrics if needed), using a dynamic programming approach. Secondly, it reveals the inter-connectivity between the fairness properties of groups at different levels. In particular, we prove that success rates must narrow as we go up the levels. We use that to prove worst case disparate impact increases with increasing levels, and worst case statistical parity decreases with increasing levels. We summarise these results as \textit{"Equality of Outcome fairness propagates up the levels"}. We then extend our results to the Equality of Opportunity setting. In particular, we show that accuracy, precision, true positive rates, false positive rates, true negative rates and false negative rates all narrow down as we progress up the levels. We summarise these results as \textit{"Equality of Opportunity fairness also propagates up the levels"}. In the future, it might be interesting to derive a mathematical description of all metrics for which this method of proof applies.\\

Furthermore, we suggest that the variance of the empirically computed success rates on a given level can be used as a fairness measure. We prove that under perfect fairness (Intersectional Statistical Parity), the variance follows a simple exponential scaling law. Using this theoretical value as benchmark, we define a family of metrics which capture the intersectional bias on each level. Future work will definitely focus on examining these metrics further and studying the behaviour of the variance and log-variance curves more closely in different settings. One of the hopes is that, the shape of the log-variance curve can help in detecting but also in locating the source of bias. For instance if bias was injected into the system using combinations of two protected attributes (Level $M-2$), we might see a bend in the log-variance curve at around those values of $K$. This could help answer questions related to the sources of bias and identifying gerrymandering. Alternatively, the variance scaling law could be used to devise similarly structured statistical tests.  \\

One of the advantages of our framework is that it allows us to analyse the whole system at once. Bias is not exclusively measured for given groups at a prescribed intersectional level, but instead its evolution is examined across all levels. However, most of our analyses aggregate over groups. This issue could be mitigated, for example, by using our framework in conjunction with other bias measures. In the future, it would be interesting to employ our propagation algorithm for more detailed modelling. For example, we could analyse how the success rates for two specific attributes interact and spread across levels. Otherwise, we could study the evolution of the fairness metrics for one specific attribute (e.g. gender) across levels or even specific paths within the hypercube. This would provide us with further insight into how bias affects that specific attribute. More broadly, our way of framing the problem of intersectionality can reveal the interconnections among intersectional subgroups, which could be key to developing appropriate metrics and mitigation techniques.

\bibliographystyle{ACM-Reference-Format}
\bibliography{bib.bib}

\appendix

\section{Complexity of Success Rate Propagation Algorithm}
\label{appendix:complexity}
 We first study the complexity of the "brute force" approach to computing all success rates. Suppose we filter the dataset by the appropriate intersectional group and then compute the success rates on the filtered data for each of the $3^M$ intersections. For each of the $3^M$ intersections, we are looping over $N$ instances and for each instance we must compare arrays of length $M$. So this method would have time complexity $O(N\times M\times 3^M)$.\\
 
As we have seen, we can compute the quantities at vertex-level in $O(N\times M)$ time. What is the time complexity of the propagation algorithm? To compute the complexity of the propagation algorithm, we consider the hypercube graph $G$ (see Fig. \ref{fig:bintree}). Each hypercube at level $K$ has $K$ stars, so it is connected with $2K$ hypercubes one level below. Recall that level $K$ consists of $H_K=\binom{M}{K}2^{M-K}$ hypercubes. So the total number of edges $E(G)$ in $G$ can be written as a sum over levels $K\in \{1,\hdots,M\}$.
\begin{equation}
E(G) = \sum_{K=1}^{M}2K\binom{M}{K}2^{M-K} \leq \sum_{K=0}^{M}2M\binom{M}{K}2^{M-K} = 2M\times 3^{M}
\end{equation}
Since our algorithm functions with recursion on the graph $G$, by reusing values that have already been computed, it would never cross the same edge twice. So the number of edges in $G$ is a strict upper bound on the time taken by the algorithm. So the time complexity of the dynamic programming approach is upper bounded by $O(N\times M+M\times 3^M)$. Assuming $M$ is small compared to $N$ and $3^M$ (and therefore modelled as a constant), the complexities of both approaches are summarized in Table \ref{tab:complexitytable}.

\section{Extension to Equality of Opportunity}
\label{appendix:eoo}

\subsection{Accuracy}
Until now we have only been working in an Equality of Outcome fairness framework. Meaning that we are seeking to equalize the outcomes of our model for different subgroups of the population. There are also Equality of Opportunity fairness notions, that seek to equalize the performance of our model on different subgroups. There is a quick way to translate our findings from an Equality of Outcome framework to an Equality of Opportunity framework. Suppose that we are given the true labels as well as the predicted labels. Name these vectors $\mathbf{y}_{true}$ and $\mathbf{y}_{pred}$ respectively. Consider the new vector 
\begin{equation}
\mathbf{y}_{correct} = \mathbf{y}_{true} \land \mathbf{y}_{pred}
\end{equation}
Suppose we use Theorem 1 with $\mathbf{y}_{correct}$ instead of $\mathbf{y}_{pred}$. We can show that the success rate of $\mathbf{y}_{correct}$ narrows as we go up the levels. However, the success rate of $\mathbf{y}_{correct}$ is precisely the accuracy of $\mathbf{y}_{true}$, $\mathbf{y}_{pred}$. So we have shown that accuracy also narrows as we go up the levels. If we define the worst case accuracy ratio and worst case accuracy difference at level $K$ as
\begin{equation}
Acc_{ratio}(K) = \frac{Acc_{min}(K)}{Acc_{max}(K)}
\end{equation}
\begin{equation}
Acc_{diff}(K) = Acc_{max}(K)-Acc_{min}(K)
\end{equation}
By using Corollary 2, we can show that $Acc_{ratio}(K)$ increases and $Acc_{diff}(K)$ decreases with increasing $K$.

\subsection{Other Metrics}
In the previous section, we showed how we can extend the original result about success rates to the analogous result about accuracy. The method we used to do so can be framed more generally. In particular, we can show analogous results for any metric that can be written as a success rate of a binary vector $\mathbf{y}_{new}$ obtained from $\mathbf{y}_{true}$, $\mathbf{y}_{pred}$. We also allow for filtering the dataset if convenient in defining the metric (as shown in the next example).\\

Suppose we wish to prove that the worst case true positive rates ($TPR$) narrow as we go up the levels. We first filter our dataset $\mathcal{D}$ to contain only rows where $\mathbf{y}_{true}=1$, we temporarily discard other rows. Then create a new binary vector $\mathbf{y}_{truepos}$ which is an indicator on true positives (1 if instance is a true positive, 0 otherwise). Observe that the success rate of this new vector is precisely the true positive rate of the original predictions and true labels. In equations
\begin{equation}
TPR(\mathbf{x}) = \frac{\textrm{\# $true \ positives$}(\mathbf{x})}{\textrm{\# $true$}(\mathbf{x})} = \frac{N^{(1)}(\mathbf{x})}{N(\mathbf{x})}=SR(\mathbf{x})
\end{equation}
By using Theorem 1 with this new dataset and new binary vector, we show true positive rates also narrow as we go up the levels. Using the same approach, we can also show that false positive rates, true negative rates, false negative rates and precision all narrow as we go up the levels.

\section{Average number of datapoints on hypercube at Level $K$}
\label{appendix:avgnum}
We claim the following formula for the average number of datapoints on a hypercube at level $K$:
\begin{equation}
    N_K = \frac{1}{H_K}\sum_{\mathbf{x}\in \textrm{Level }K}N(\mathbf{x})=N2^{M-K}
\end{equation}
\begin{proof}
For $K=0$, the formula holds
    \begin{equation}
    N_0 = \frac{1}{2^M}\sum_{\mathbf{x}\in \textrm{Level }0}N(\mathbf{x})=\frac{N}{2^M}
\end{equation}
For any level $K>0$,
\begin{equation}
    N_K = \frac{1}{H_K}\sum_{\mathbf{x}\in \textrm{Level }K}N(\mathbf{x})=\frac{1}{H_K}\sum_{\mathbf{x}\in \textrm{Level }K}\frac{1}{K}\sum_i \sum_j N(\mathbf{x}_i^{(j)})
\end{equation}
Each hypercube at level $K-1$ appears precisely $M-K+1$ times in the triple sum, so we can rewrite it as
\begin{equation}
    N_K = \frac{1}{H_K} \frac{M-K+1}{K}\sum_{\mathbf{x}\in \textrm{Level }K-1}N(\mathbf{x})
\end{equation}
and since, $H_K = \binom{M}{K}2^{M-K}$, we have
\begin{equation}
     \frac{1}{H_K} \frac{M-K+1}{K} = \frac{K!(M-K)!}{M!}2^{K-M} \frac{M-K+1}{K} = 2\frac{(K-1)!(M-K+1)!}{M!}2^{K-1-M} = 2\frac{1}{H_{K-1}}
\end{equation}
Therefore
\begin{equation}
    N_K = 2\frac{1}{H_{K-1}}\sum_{\mathbf{x}\in \textrm{Level }K-1}N(\mathbf{x})=2 N_{K-1}
\end{equation}
Which concludes the proof by induction.
\end{proof}
\section{Variance of empirical success rates at Level $K$ under ISP assumption}
\label{appendix:varformula}
We claimed that under ISP (and uncorrelated samples), the expected empirical variance can be lower bounded by $\frac{H_K-1}{H_K}\frac{p_{tot}(1-p_{tot})}{N_K}$. The proof is as follows
\begin{proof}
We make 2 assumptions (A1, A2)
\begin{enumerate}
    \item The classifier generating the label data is ISP.
    \item The samples $SR(\mathbf{x}), \mathbf{x}\in \textrm{Level }K$ are uncorrelated.
\end{enumerate}
\begin{equation}
    \mathbb{E}[Var(K)] = \mathbb{E}\left [ \frac{1}{H_K}\sum_{\mathbf{x}\in\textrm{Level }K}SR(\mathbf{x})^2-\left (\frac{1}{H_K}\sum_{\mathbf{x}\in\textrm{Level }K} SR(\mathbf{x})\right )^2 \right ]
\end{equation}
Expand the sum
\begin{equation}
    = \mathbb{E}\left [ \frac{1}{H_K}\sum_{\mathbf{x}}SR(\mathbf{x})^2-\frac{1}{H_K^2}\sum_{\mathbf{x},\mathbf{y}} SR(\mathbf{x})SR(\mathbf{y}) \right ]
\end{equation}
Remove the cross terms that amount to squares, and aggregate them in the first sum
\begin{equation}
    = \mathbb{E}\left [ \left(\frac{1}{H_K}-\frac{1}{H_K^2}\right )\sum_{\mathbf{x}}SR(\mathbf{x})^2-\frac{1}{H_K^2}\sum_{\mathbf{x}\neq\mathbf{y}} SR(\mathbf{x})SR(\mathbf{y}) \right ]
\end{equation}
Now we use linearity of expectation and both assumptions A1, and A2. A1 is used to replace expectations with their theoretical value $p_{tot}$. A2 is used to transform expectations of products into products of expectations.
\begin{equation}
    = \left(\frac{1}{H_K}-\frac{1}{H_K^2}\right )\sum_{\mathbf{x}}\mathbb{E}[SR(\mathbf{x})^2]-\frac{1}{H_K^2}\sum_{\mathbf{x}\neq\mathbf{y}} p_{tot}^2
\end{equation}
\begin{equation}
    = \frac{H_K-1}{H_K^2} \sum_{\mathbf{x}}\mathbb{E}[SR(\mathbf{x})^2]-\frac{H_K-1}{H_K}p_{tot}^2
\end{equation}
\begin{equation}
    = \frac{H_K-1}{H_K^2} \sum_{\mathbf{x}}\mathbb{E}[SR(\mathbf{x})^2]-\mathbb{E}[SR(\mathbf{x})]^2
\end{equation}
\begin{equation}
    = \frac{H_K-1}{H_K^2} \sum_{\mathbf{x}}Var[SR(\mathbf{x})]
\end{equation}
\begin{equation}
    = \frac{H_K-1}{H_K^2} \sum_{\mathbf{x}}\frac{p_{tot}(1-p_{tot})}{N(\mathbf{x})}
\end{equation}
Using Jensen's inequality applied to concave function $1/X$, 
\begin{equation}
    \frac{1}{H_K}\sum_{\mathbf{x}\in\textrm{Level }K} \frac{1}{N(\mathbf{x})} \geq \frac{1}{\frac{1}{H_K}\sum_{\mathbf{x}\in\textrm{Level }K} N(\mathbf{x})} = \frac{1}{N_K}
\end{equation}
So we obtain
\begin{equation}
    \mathbb{E}[Var(K)] \geq  \frac{H_K-1}{H_K}\frac{p_{tot}(1-p_{tot})}{N_K}
\end{equation}
And the bound is tight iff all sample sizes are the same on the given level.
\end{proof}


\section{Metrics for Experiment 2 measured across all levels}
\label{appendix:metrics_big_table}

\begin{table}[H]
    \centering
\begin{tabular}{|c|c|c|c|c|c|c|c|c|c|c|} 
\hline
  \textbf{ } & \textbf{$\delta=0$} & \textbf{$\delta=0.1$} & \textbf{$\delta=0.2$} & \textbf{$\delta=0.3$} & \textbf{$\delta=0.4$}\\ 
\hline
Level 0 &  0.989688 &   1.735759 &    4.191394 &    7.966448 &   13.347224 \\
\hline
Level 1 &  0.993014 &   2.219624 &    6.220185 &   12.409443 &   21.233369 \\
\hline
Level 2 &  0.995939 &   3.001205 &    9.507072 &   19.598287 &   34.003209 \\
\hline
Level 3 &  1.000630 &   4.251804 &   14.776882 &   31.110746 &   54.461175 \\
\hline
Level 4 &  1.009613 &   6.221990 &   23.086325 &   49.241288 &   86.686218 \\
\hline
Level 5 &  1.024945 &   9.240775 &   35.823894 &   77.001449 &  136.032915 \\
\hline
Level 6 &  1.046565 &  13.636296 &   54.377402 &  117.403449 &  207.857395 \\
\hline
Level 7 &  1.069158 &  19.405291 &   78.747043 &  170.448156 &  302.164360 \\
\hline
Level 8 &  1.074935 &  25.157049 &  103.123825 &  223.490712 &  396.491092 \\
\hline
Level 9 &  1.013403 &  25.073643 &  103.158698 &  223.463451 &  396.570403 \\
\hline
\end{tabular}
\caption{\textbf{Experiment 2 Metrics Table}. We show the values of the metric $VarRatio(K)$ for $K=0,\hdots,9$ and parameter $\delta=0,\hdots,0.4$}
\label{tab:metricstable}
\end{table}

\end{document}